\newtheorem{hypothesis}{Hypothesis}
\newtheorem{theorem}{Theorem}
\title{Proof of Quality: A Costless Paradigm for Trustless Generative AI Model Inference on Blockchains}
\author{Zhenjie Zhang$^{a*}$, Yuyang Rao$^{a}$, Hao Xiao$^{a}$, Xiaokui Xiao$^{b}, $Yin Yang$^{c}$  \\
        \small $^{a}$Nose Labs\\
        \small $^{b}$School of Computing, National University of Singapore, Singapore \\
        \small $^{c}$College of Science and Engineering, Hamad Bin Khalifa University, Qatar \\
        \small $^{*}$Corresponding author: \tt{zhenjie.zhang@nose.red} \\
}
\date{}
\begin{document}
\maketitle
\begin{abstract} 
Generative AI models, such as GPT-4 and Stable Diffusion, have demonstrated powerful and disruptive capabilities in natural language and image tasks. However, deploying these models in decentralized environments remains challenging. Unlike traditional centralized deployment, systematically guaranteeing the integrity of AI model services in fully decentralized environments, particularly on trustless blockchains, is both crucial and difficult. In this paper, we present a new inference paradigm called \emph{proof of quality} (PoQ) to enable the deployment of arbitrarily large generative models on blockchain architecture. Unlike traditional approaches based on validating inference procedures, such as ZKML or OPML, our PoQ paradigm focuses on the outcome quality of model inference. Using lightweight BERT-based cross-encoders as our underlying quality evaluation model, we design and implement PQML, the first practical protocol for real-world NLP generative model inference on blockchains, tailored for popular open-source models such as Llama 3 and Mixtral. Our analysis demonstrates that our protocol is robust against adversarial but rational participants in ecosystems, where lazy or dishonest behavior results in fewer benefits compared to well-behaving participants. The computational overhead of validating the quality evaluation is minimal, allowing quality validators to complete the quality check within a second, even using only a CPU. Preliminary simulation results show that PoQ consensus is generated in milliseconds, 1,000 times faster than any existing scheme.
\end{abstract}

\noindent\keywords{proof of quality, generative model, inference, blockchain, trustless, mechanism}\\



\section{Introduction}


Deep learning has revolutionized the field of artificial intelligence, demonstrating significant advancements and high potential across various domains such as natural language processing (NLP), image recognition, and audio processing. In NLP, models like BERT and GPT-3 have set new benchmarks for tasks including machine translation, sentiment analysis, and question-answering, showcasing their ability to understand and generate human language with remarkable accuracy \citep{devlin2018bert, brown2020language}. Similarly, in image recognition, convolutional neural networks (CNNs) have achieved unprecedented performance, enabling applications ranging from autonomous driving to medical diagnostics \citep{krizhevsky2012imagenet, esteva2017dermatologist}. In the realm of audio processing, deep learning models have excelled in speech recognition and generation, as exemplified by systems like WaveNet and transformers-based architectures, which have significantly improved the quality and reliability of audio-based interactions \citep{van2016wavenet}. These breakthroughs highlight the transformative impact of deep learning on various aspects of AI, paving the way for future innovations and applications.



Combining AI and blockchain technologies is increasingly recognized as both important and necessary due to the complementary strengths and benefits each offers. AI excels in processing and analyzing large volumes of data, making intelligent predictions, and automating complex tasks, while blockchain provides a decentralized, secure, and transparent ledger system. Integrating AI with blockchain can enhance data security and integrity, ensuring that the data used by AI algorithms is tamper-proof and verifiable \citep{zhang2018fhirchain}. This synergy can mitigate the risks associated with data breaches and fraud, which are critical in sectors like finance, healthcare, and supply chain management. Additionally, blockchain can enhance the trustworthiness of AI systems by providing transparent audit trails, thus enabling more robust and accountable decision-making processes \citep{casino2019systematic}. Furthermore, decentralized AI models on the blockchain can democratize access to AI capabilities, preventing monopolistic control and fostering a more equitable distribution of technological benefits \citep{salah2019blockchain}. This integration is pivotal for driving innovation and ensuring that the deployment of AI systems aligns with principles of security, transparency, and fairness.


Consensus is a widely used strategy in blockchain networks, where each computation is replicated across multiple nodes, and the outcome with the most agreements is accepted. However, this approach is unsuitable for generative AI models due to the substantial computational demands of model inference, which make replication across the network prohibitively expensive \citep{zheng2021agatha}. Additionally, the inherent high latency of consensus mechanisms would render the service impractical for real-world generative AI applications, as users require low-latency responses for effective functionality.


To achieve low-latency, high-performance, and trustworthy generative model inference on blockchain, researchers are turning to advanced cryptographic techniques like zero-knowledge proofs (Ben-Sasson et al., 2014). However, these existing techniques face significant limitations in scalability and handling floating-point numbers, making them impractical for real-world generative AI models. For instance, ZKML \citep{weng2021mystique} aims to convert deep neural networks into circuits and generate proofs of inference operations, but it is only feasible for simple models with a few layers. Similarly, OPML \citep{conway2024opml} attempts to streamline the process based on optimistic assumptions, yet it requires hours to validate the inference of even small-scale Transformer-based models. Such low performance limitation makes it difficult to apply these strategies in real-world use cases of generative AI applications.


In this paper, we propose a new paradigm called \emph{proof of quality} (PoQ). Drawing inspiration from the well-known \emph{proof of work} and \emph{proof of stake} strategies in classic blockchain architecture, PoQ focuses on validating the quality of model inference outputs rather than the inference process itself. The concept of PoQ is motivated by several observations. Firstly, in generative AI services, output quality is far more important than the inference process. Users are willing to pay for the service only when the generated response is satisfactory. Secondly, model output quality is not necessarily proportional to the computational workload of the model. While scaling laws generally indicate better performance with larger models \citep{kaplan2020scaling}, actual model behavior can vary, with some small yet well-optimized models outperforming larger ones. Therefore, it is more reasonable to allocate rewards to service vendors on the blockchain based on output quality rather than computational workload. Thirdly, validating model inference output is usually much easier than the inference process itself. State-of-the-art NLP models, such as Llama 3 \citep{touvron2023llama} and Mixtral \citep{jiang2023mistral}, typically contain billions of parameters. However, a slimmer model with millions of parameters can effectively assess the quality of outputs from these larger models.

PoQ is designed to support \textbf{trustworthy} model inference services on blockchains with \textbf{minimal overhead}. Specifically, given a user query, such as a question in plain text, the generative model is expected to generate a corresponding response, such as a text answer or an image. We divide the model reasoning process into three phases. In the inference phase, a single participant with strong computing power, such as GPUs, generates the output response by running a generative model with the query as input. In the assessment phase, the output is independently reviewed by a group of assessors in the network. The quality assessment is based solely on the pair of input and output. Finally, in the consensus phase, the network reaches an agreement on the overall quality score of the query-response pair, and rewards for all participants are determined accordingly.


PoQ offers a simple yet elegant solution to the trustless AI challenge on blockchains. The overhead is minimized because the computational burden of the quality assessment, even when performed independently across the network, is significantly lower than that of the generative model inference. This makes \emph{proof of quality} an ideal solution for deployment in mainstream blockchain systems.


To summarize, this paper is expected to deliver the following research contributions based on the concept of \emph{proof of quality}:

\begin{enumerate}
    \item We provide a formal definition of the framework of \emph{proof of quality}.
    \item We design a concrete protocol, called \emph{Inference Quality Machine Learning} or PQML in short, based on \emph{proof of quality}, targeting NLP-based generative AI models.
    \item We theoretically analyze the behaviors of adversaries based on some assumptions to prove the robustness of our protocol.
    \item We discuss performance issues and propose implementation optimizations to enhance the efficiency and reliability of PQML protocol.
    \item We conduct experimental studies and present the results to validate our robustness and efficiency claims.
\end{enumerate}

The rest of the paper is organized as follows. Section \ref{sec:poq} introduces the basic concept of \emph{proof of quality}. Section \ref{sec:pqml} provides a concrete protocol based on PoQ targeting at NLP-based generative models. Section \ref{sec:analysis} analyzes the theoretical robustness of PQML against adversaries. Section \ref{sec:opt} depicts how to build PQML as a high-performance AI service. Section \ref{sec:exp} presents our empirical study results and finally Section \ref{sec:conclu} concludes the paper with future research direction discussion. 

\section{Proof of Quality}\label{sec:poq}
\begin{table}[t]
    \caption{Table of notations}
    \label{tab:notation_table}
    \vspace{-5pt}
\begin{center}
    \begin{tabular}{| c | l |}
       \hline
       Symbol & Notation \\
       \hline \hline
       $u$  & a user of the generative AI inference service \\ \hline
       $q\in\mathbb{Q}$  & Query or question from the generative AI system user \\ \hline
       $r\in\mathbb{R}$  & Response or answer to the user \\ \hline 
       $F$  & The generative AI model generating $r$ based on query $q$ \\ \hline
       $p$ & Zero-knowledge proof of the computation of $r=F(q)$ \\ \hline
       $s\in\mathbb{S}$ & Quality score on query-response pair $(q,r)$ \\ \hline
       $M$ & The quality score method, i.e., $s=M(q,r)$ \\ \hline
       $\mbox{Pu}_i$ & Public key for the $i$-th quality assessor \\ \hline
       $\mbox{Pr}_i$ & Private key for the $i$-th quality assessor \\ \hline
       $k$ & the number of quality scores for final quality consensus \\ \hline
       $\chi()$ & the inference reward calculation function \\ \hline
       $\alpha$ & scaling factor for inference reward function \\ \hline
       $\phi()$ & the quality assessment reward calculation function \\ \hline
       $\beta$ & scaling factor for quality assessment function \\ \hline
    \end{tabular}    
\end{center}
\end{table}

In this section, we formulate our \emph{Proof of Quality} framework, or PoQ in short. For better readability of the technical descriptions, all notations throughout the paper are summarized in Table \ref{tab:notation_table}. In PoQ framework, a user $u$ issues a query $q$ to the generative AI service. The generative AI model inference is formulated as a function $F:q\rightarrow r$, in which $q\in \mathbb{Q}$ is a query $q$ from the query domain $\mathbb{Q}$ and $r\in\mathbb{R}$ is a response from the response domain $\mathbb{R}$. The function $F$ is usually a computation-intensive model, such as GPT4 or stable diffusion.

\noindent \textbf{Standard Inference:} As a vanilla inference process, the engine runs the function $F$ straightforwardly and generates the output as $r=F(q)$. The problem with the standard inference is the lack of trustworthy guarantees. In a decentralized system, the user can hardly verify if the inference process is executed properly. A dishonest service provider could run a low-cost model for better profit.

\noindent \textbf{ZKML Inference:} generates two outputs, i.e., $(r,p)=F(q)$. Besides the response $r$, it also outputs a zero-knowledge proof $p$ of the computation process $r=F(q)$. Both $r$ and $p$ are sent to the user $u$. The user $u$ may want to validate the computation process based on the triple $(q,r,p)$ at his discretion.

\noindent \textbf{PQML Inference:} generates two outputs, i.e., $(r,v)=F(q)$. The verifiable result $v$ contains data and information helpful to computation validation. Given $v$, any validator could repeat the inference process in a VM to validate the process of $r=F(p)$. Different from ZKML inference, the user $u$ only receives the response $r$, while $(p,q,r)$ is sent to independent validators, who would challenge the result if any intermediate results in its validation do not match \citep{conway2024opml}.

\noindent \textbf{PoQ Inference:} generates the response $r$ only. The query-response tuple $(p,q)$ is sent to some quality assessors. The quality assessor runs a quality assessment method $M;\mathbb{Q}\times\mathbb{R}\rightarrow\mathbb{S}$, which maps every pair of query $q$ and response $r$ to a score $s$ in a fixed score domain $\mathbb{S}$. A typical score domain $\mathbb{s}$ is a closed set of real numbers, e.g., $[0,10]$, such that a higher number indicates a better quality of the response $r$. PoQ is the framework focusing on the quality instead of the computation process.

\begin{figure}[h]
\centering
     \includegraphics[width=1.0\textwidth]{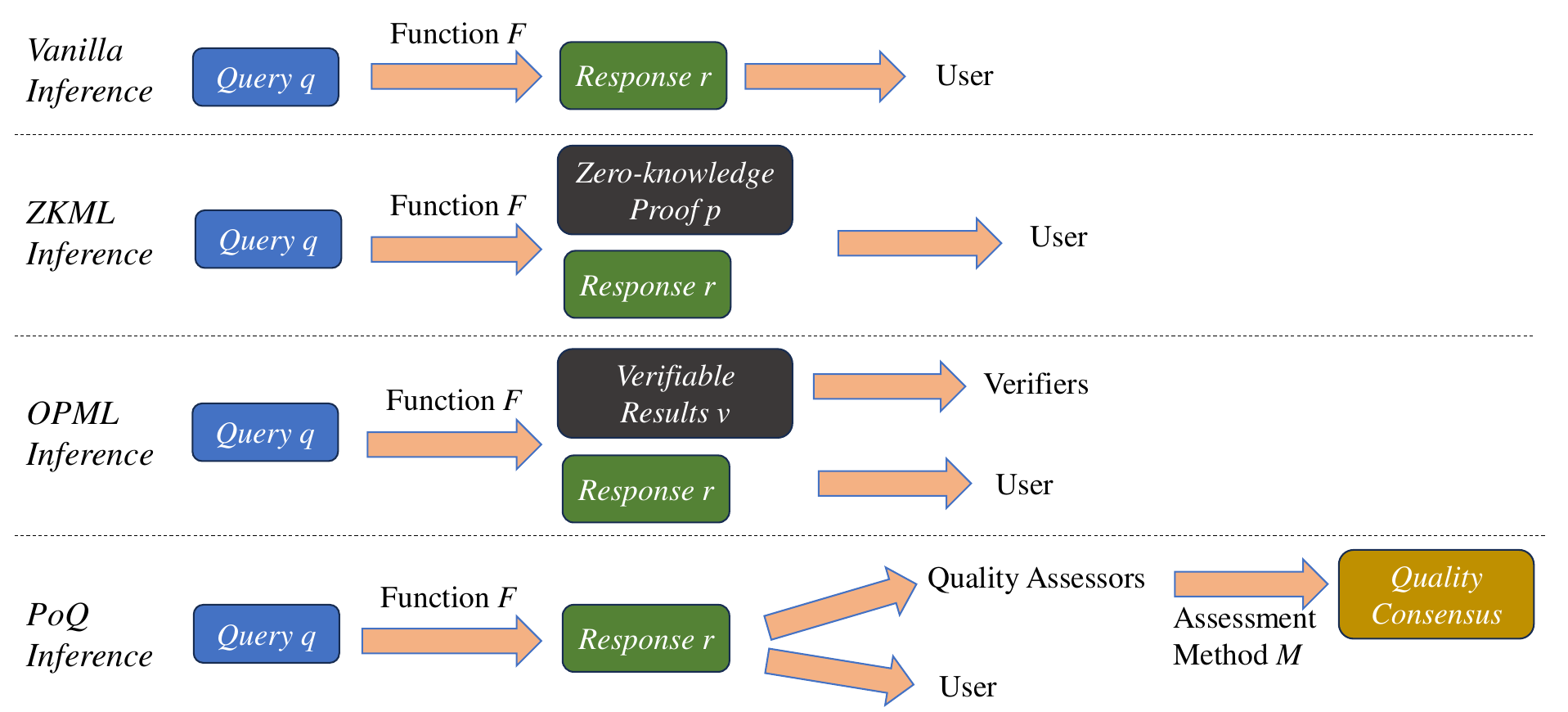}
     \vspace{-10pt}
      \caption{While standard inference only generates the response $r$ based on specified input query $q$, ZKML and PQML generate auxiliary information, i.e., zero-knowledge proof $p$ and verifiable results $v$ respectively, to enable the user or the verifiers to check the integreity of the whole computation process. PoQ adopts a very different strategy. It relies on third-party quality assessors to check the model output quality by evaluating the query-response pair $(q,r)$.}
       \label{fig:framework}
\end{figure}

In Figure \ref{fig:framework}, we summarize the workflows of all the inference frameworks above to illustrate their difference. Our PoQ framework is generic to handle all types of generative AI models. The concrete implementations of the framework, however, may vary because different output domains $\mathbb{R}$ require very different quality assessment methods. There is a huge difference between the quality of generated images and the quality of chatbot replies. It is thus crucial to select the suitable assessment method based on the domain $\mathbb{R}$. A good quality assessment method $M$ is expected to be

\begin{enumerate}
    \item Lightweight: the method's execution is fast, e.g., in milliseconds, and using limited computation resources, e.g., by CPUs.
    \item Robust: the method can distinguish good responses from bad ones and remain reliable when adversaries attempt to break the assessment process.
    \item Easy to implement: the method can be easily deployed in decentralized environments.
\end{enumerate}

The rest of the paper will provide technical evidence to prove all three requirements above could be satisfied for specific NLP-domain generative AI models. To fight against dishonest and lazy quality assessors, PoQ framework must also guarantee either of the following two conditions: 1) the quality assessment method $M$ could be fully validated, or 2) a group consensus on the score $s$ for any pair $(p,q)$ could be generated efficiently on the blockchain. The first condition could be satisfied by running ZKML or PQML over the quality assessment method $M$. The validation complexity is greatly reduced if $M$ is much less computation-intensive than the original model $F$. To the best of our knowledge, a proper quality assessment model remains unaffordable to any existing cryptographic solution. Instead, we attempt to propose an effective and efficient solution for the second condition, i.e., an approach to group consensus on quality scores. 

Depending on the complexity of the quality assessment method, there are different options on how the whole network generated consensus agreement on the quality score. If the computation overhead of the quality assessment method is insignificant, it is affordable to invite multiple participants to calculate the quality score independently. All scores from independent quality assessors are aggregated, by either averaging all scores or a simple majority voting. 

Finally, it is also important to design a reasonable incentive mechanism to avoid unexpected adversarial behavior in the system. The adversary may exploit a poorly designed incentive mechanism by generating low-quality output with minimal resources, yet take advantage of the quality assessment method $M$ to gain additional margins. We will discuss the incentive mechanism in more detail in Section \ref{sec:analysis}.

\section{PQML Protocol}\label{sec:pqml}

Based on the design space discussion of \emph{PoQ} framework in the last section, we design a concrete protocol, called \emph{Proof Quality based Machine Learning} or PQML in short, targeting specific generative AI models in the NLP domain. Specifically, the query domain $\mathbb{Q}$ and the response domain $\mathbb{R}$ are both plain texts in PQML. If the query $q$, for example, is \"What's the weather condition in next 2 hours?\", the response $r$ could be \"Mostly cloudy.\" Such domain setting fits almost all GPT-alike applications, such as service chatbot, auto coding, and others. In the rest of the section, we discuss our selections for concrete PQML in the design space of PoQ. The design decisions on these three aspects are closely connected, because the execution efficiency of quality assessment is crucial to the design of the consensus mechanism, and the incentive mechanism heavily depends on the consensus mechanism. 

\subsection{Quality Assessment Method}\label{sec:pqml:quality}

We are motivated by the huge success of the cross-encoder technique \citep{reimers2019sentence}, which is now commonly used as a filtering method for the Retrieval Augmented Generation (RAG) scheme \citep{lewis2020retrieval}. The basic idea is concatenating the text query $q$ and a text document. The combined string is fed into a Transformer-based model, e.g., BERT \citep{vaswani2017attention}, to generate a scalar score. The score measures the consistency between the query and response, indicating the goodness of the answer to the query. In PQML, we directly reuse the fully optimized cross-encoder models available on the Huggingface platform\footnote{\url{https://huggingface.co/cross-encoder}}. These models are optimized based on a large corpse containing pairs of questions and relevant documents during model training.

\begin{table}
\caption{We test different cross-encoder models on CPU and GPU and report performance on throughput and latency. The throughput is measured by the number of query-response pairs the system can process within a second. The latency is the average waiting time in seconds when processing a batch containing 1,000 query-response pairs because batch processing is way more efficient than processing one pair at a time. With an NVIDIA A100 GPU, we can easily process over 1,600 query-response pairs every second with latency within 0.7 seconds. The throughput is much lower when using CPUs instead of GPUs, but it still reaches 200+ throughput and 4 seconds latency in batch processing. }\label{table:cross-model-performance}
\begin{center}
\begin{tabular}{ | c | c | c | c| c | c |}
 \hline
 \multirow{2}{*}{Model} &\multirow{2}{*}{Size} & \multicolumn{2}{c|}{GPU(A100) } & \multicolumn{2}{c|}{CPU }  \\ \cline{3-6}
 & & Throughput & Latency (sec) & Throughput & Latency (sec) \\ \hline
 nli-deberta-v3-large & 1.7GB  & 1,666 & 0.61 & 48 & 20.5 \\ \hline
 nli-deberta-v3-medium & 0.7GB & 4,394 & 0.23 & 140 & 7.4 \\  \hline
 nli-deberta-v3-small &  0.5GB & 8,256 & 0.12 & 241 & 3.7 \\ \hline   
\end{tabular}
\end{center}
\end{table}

In Table \ref{table:cross-model-performance}, we summarize the performance statistics of mainstream cross-encoder models available in the open-source community. The results show that the inference latency and the operational cost of these models are much lower than common large language models (LLM), such as Llama 2 and Mixtral. All of them can be efficiently run on both CPUs and GPUs. Such low resource requirement makes the quality assessment step in PQML affordable.

\subsection{Consensus Mechanism}\label{sec:pqml:consensus}

The performance results in Table \ref{table:cross-model-performance} imply that cross-encoder models are very efficient, It is therefore affordable to deploy a simpler consensus mechanism with replicated computation to build a robust and fast-responding decentralized system. Each query-response pair is sent to $k>1$ quality assessors, and their quality scores are aggregated as the group consensus on the overall quality evaluation. The potential risk of the collective quality assessment method is the possibility of lazy quality assessors who may copy the scores from diligent quality assessors. The possible motivation of these lazy quality assessors is to reduce their computation costs and increase their profit.

To circumvent the potential problem, an encryption-based two-phase consensus mechanism is introduced to ensure every quality assessor must conduct an independent computation on the incoming pair of queries and responses. This new mechanism consists of two phases, as depicted in Figure \ref{fig:fast-consensus}. we need at least $k$ scores from different quality assessors. 

\noindent\textbf{Phase 1: }After receiving the query-response pair, the $i$-th quality assessor calculates the quality by running the specified cross-encoder model, i.e., $s_i = M(q,r)$. At the same time, the quality assessor generates a pair of public and private keys for encryption. Given the private key $\mbox{Pr}_i$ and public key $\mbox{Pu}_i$, the assessor encrypts the score by $\hat{s}_i = \mbox{Pr}_i(s)$ and writes $\hat{s}_i$ to an immutable file accessible to every participant in the network.

\noindent\textbf{Phase 2: }When sufficient quality assessment scores are collected, e.g., above a pre-defined threshold $k$, the quality assessors start to write their public keys $\{\mbox{Pu}_i\}$ to the shared file. Any participant could recover the scores and generate the average of original scores as the final assessment of the quality for response $s$, i.e., $\frac{1}{k}\sum_i \mbox{Pu}_i(\hat{s}_i)$.

The operational cost of the two-phase mechanism above is low because the public/private key generation and encryption process are both efficient on modern CPUs. It only incurs minimal latency on waiting for all quality assessors to finalize their quality scores. In Section \ref{sec:opt}, we will discuss additional performance optimization by skipping slow-responding quality assessors in the network.

\begin{figure}[t]
\centering
     \includegraphics[width=1.0\textwidth]{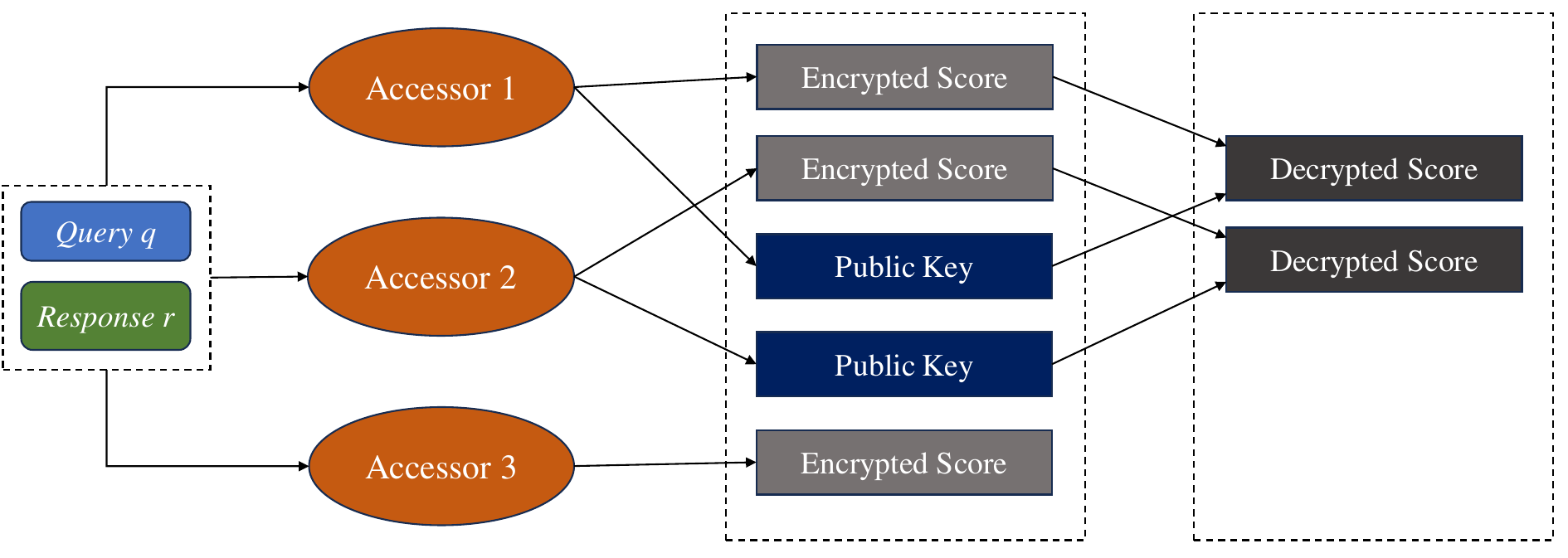}
     \vspace{-10pt}
      \caption{In this example, three assessors are responsible for generating the quality scores, and $k=2$ expected quality scores are needed for quality consensus. Assessor 1 and Assessor 2 publish their encrypted scores in a mutable and shared storage system. When $k=2$ scores are available, successful quality score nodes upload their corresponding public keys. A late arrival of an encrypted score from Assessor 3 is not included in the consensus calculation.}
       \label{fig:fast-consensus}
\end{figure}

\subsection{Incentive Scheme}\label{sec:pqml:incentive}

The incentive scheme in PQML is equally important to the consensus mechanism. The consensus mechanism alone does not encourage truth-telling behavior \citep{osborne2004introduction}. This is because the participants may not maximize their benefits by exactly following the protocol of PQML. The incentive scheme fills the gap, with a carefully designed reward system in which the participant's interests are maximized if they behave properly as an inference node or quality assessment node in the network.

\noindent\textbf{Inference Reward: }Given $k$ quality scores from assessors, i.e., $\{s_1,s_2,\ldots,s_k\}$ with each $s_i\in[L,U]$, the reward is a function $\chi(s_1,s_2,\ldots,s_k)$. Here $L$ and $U$ are the lower bound and upper bound of any quality score respectively. The output of the reward function $\chi$ is a real number between 0 and 1, which indicates the portion of reward the inference node could claim. There are two expected features of the reward function. First, the reward must be proportional to the quality score, so that the inference node receives more reward if its output response quality is higher. Second, the profit of operation with a more powerful model $F$ is higher than that of a less energy-consuming model $F'$, in the sense that the node owner would not make more earnings by replacing $F$ with a less powerful $F'$. Note that the less powerful model $F'$ may not refer to weaker models but also non-computation alternatives, such as returning random text strings or copying results from other vendors, e.g., ChatGPT.

Based on the expected features above, the inference reward function $\chi$ is designed as follows:

\begin{equation}
\chi(s_1,s_2,\ldots,s_k) = \exp\left(-\alpha\left(U-\frac{\sum_i s_i}{k}\right)\right),
\end{equation}

Here, $\alpha$ is the decaying scale factor, controlling how fast the reward decays with the descending overall quality score. The selection of $\alpha$ depends on the generative AI model market. We can always find a sufficiently large $\alpha$ to avoid any possibility of arbitrage by replacing better generative AI models with poorer ones, as later discussed in Section \ref{sec:analysis}. 

\noindent\textbf{Quality Assessment Reward: }We define $\phi(s_1,s_2,\ldots,s_k)$ as the reward distribution function, whose output is a vector of size $k$, i.e., $(h_1,h_2,\ldots,h_k)$, such that $\sum_i h_i = 1$ and $0\leq h_i\leq 1$ for each $i$. To enforce fairness among quality assessors, the reward distribution function $\phi$ is expected to 1) give more reward to assessors running the desired cross-encoder model; and 2) minimize the reward to assessors who do not properly evaluate the quality of query-response pair. This leads to the following design of function $\phi$, with $\bar{s}$ as the average of the scores:

\begin{equation}
h_i = \frac{\exp\left(-\beta\left(s_i-\bar{s}\right)^2\right)}{\sum_j \exp\left(-\beta\left(s_j-\bar{s}\right)^2\right)}
\end{equation}

A quality assessor claims more reward if its quality score is closer to the average score. If all quality assessors stick to the PQML protocol, due to the missing randomness in the cross-encoder model, the quality assessment reward is always equally assigned to each of the assessors, as

\begin{equation}
    \phi(s_1,s_2,\ldots,s_k)=\left(\frac{1}{k},\frac{1}{k},\ldots,\frac{1}{k}\right)
\end{equation}

The reward function above encourages the quality assessors to strictly use the specified quality assessment method $M$, since his own benefit would be hurt if it generates quality scores different from others. In the reward function in Formula (2), $\beta$ is a fixed parameter for all quality assessors. By using different $\beta_i$ for each quality assessor, the reward distribution is more biased to assessors with higher $\beta_i$, which could represent reputation or stake of the assessor in the network.

\section{Adversarial Analysis}\label{sec:analysis}

In this section, we focus on the analysis of the potential adversarial behavior of participants in the network, specifically the inference nodes and the quality assessor nodes. In the following, we first claim all hypotheses of the adversarial participants and then prove the robustness of PQML protocol against the adversaries.

\begin{hypothesis}\label{hypo:rational}
    All adversaries are \textbf{rational}, in the sense that they always attempt to maximize their profit when participating in the network.
\end{hypothesis}

The hypothesis above assumes that all participants in the network are profit-driven. They would not conduct anything \textit{improperly} if their behavior only results in less profit. It excludes potential adversaries who deliberately misbehave regardless of the economic loss.

To analyze the behavior of the inference nodes, we further assume there is a list of known and usable models in the market, i.e., $\{F_1,F_2,\ldots,F_n\}$. Given a query $q$, each model $F_j$ is associated with an expected quality score $e_j$ and operational cost $c_j$. For example, we could easily estimate the cost of calling APIs of GPT4 based on the input and expected response lengths. The following hypothesis regulates the possible distribution of $\bigcup_j\{(e_j,c_j)\}$.

\begin{hypothesis}\label{hypo:monotonic}
    There is \textbf{no domination} in the model market, such that for any pair of models $F_j$ and $F_l$, either $e_j>e_l$ or $c_j<c_l$.
\end{hypothesis}

Given two models $F_j$ and $F_l$, if $e_j\geq e_l$ and $c_j\leq c_l$, $F_j$ dominates $F_l$ because $F_j$'s expected quality is better than $F_l$'s and $F_j$'s cost is lower than $F_l$'s. $F_l$ is unlikely to survive in the market since no customer would ever choose $F_l$ as their product. Existing studies show that there always exists a stable market configuration in any domination-based market competition \citep{zhang2009domination}. Therefore, we could further claim all $e_j$ and $c_j$ for any model $F_j$ is static for a given period.

\begin{theorem}\label{the:inference}
Given a list of models $\{F_1,F_2,\ldots,F_n\}$ associated quality-cost matrix $\{(e_1,c_1),(e_2,c_2),\ldots,(e_n,c_n)\}$ in non-ascending order on $c_j$, there exists a threshold $\theta$, when $\alpha\geq\theta$ in the reward distribution function $\chi$, the inference node would always choose the most affordable model in the list.
\end{theorem}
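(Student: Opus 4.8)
The plan is to turn the node's model choice into the maximization of a one-dimensional expected-profit function over the finite list $\{F_1,\dots,F_n\}$, and to show that once $\alpha$ is large the fixed cost gaps dominate the (exponentially small) reward gaps, so the cheapest model $F_n$ becomes the unique profit-maximizer.

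First I would extract the market structure from the hypotheses. Ordering the list so that $c_1\ge c_2\ge\cdots\ge c_n$, I apply Hypothesis~\ref{hypo:monotonic} to every pair. For $j<l$ the alternative $c_j<c_l$ is false, so the no-domination clause forces $e_j>e_l$; applying the same clause with the roles of $j$ and $l$ swapped then forces $c_j>c_l$ as well. Hence both sequences are \emph{strictly} monotone, $c_1>\cdots>c_n$ and $e_1>\cdots>e_n$, so $F_n$ is simultaneously the unique cheapest (``most affordable'') and the unique lowest-quality model. I also record the mild but essential assumption that no model saturates the score domain, i.e. $e_1<U$; this is exactly the condition that makes every reward tend to $0$, and the theorem can fail without it.

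Next I fix the node's objective. By Hypothesis~\ref{hypo:rational} the node maximizes expected profit, which for model $F_j$ I model as $\pi_j(\alpha)=B\,\chi_j-c_j$, where $B>0$ is the per-query reward pool apportioned by $\chi$, and $\chi_j=\exp(-\alpha(U-e_j))$ is the inference reward $\chi$ evaluated at the model's expected score $e_j$. Writing, for $j\ne n$, $\pi_n(\alpha)-\pi_j(\alpha)=B(\chi_n-\chi_j)+(c_j-c_n)$, I bound $\chi_n-\chi_j\ge-\chi_j\ge-\chi_1=-\exp(-\alpha(U-e_1))$ using $0\le\chi_j\le\chi_1$. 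Thus $\pi_n-\pi_j\ge(c_j-c_n)-B\exp(-\alpha(U-e_1))$, and since $c_j-c_n\ge c_{n-1}-c_n>0$ this is positive as soon as $\exp(-\alpha(U-e_1))<(c_{n-1}-c_n)/B$. Setting $\theta=\max\left\{0,\ \tfrac{1}{U-e_1}\ln\tfrac{B}{c_{n-1}-c_n}\right\}$ gives $\pi_n(\alpha)>\pi_j(\alpha)$ for all $j\ne n$ whenever $\alpha\ge\theta$, which is exactly the claim.

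The routine parts are the monotonicity bookkeeping and the single exponential estimate. The step I expect to be the real obstacle is justifying the replacement of the stochastic reward $\exp(-\alpha(U-\bar s))$ by its deterministic surrogate $\exp(-\alpha(U-e_j))$: since $\chi$ is convex in $\bar s$, Jensen's inequality only yields the lower bound $\mathbb{E}[\chi]\ge\exp(-\alpha(U-e_j))$, whereas the argument needs the expected reward to \emph{vanish} as $\alpha\to\infty$. I would close this gap either by adopting the deterministic-score model explicitly, or by assuming the realized average score stays bounded away from the ceiling (e.g. $\bar s\le U-\delta$ almost surely, or a sub-Gaussian tail for $\bar s$ around $e_j$) and re-running the final estimate with $\mathbb{E}[\chi_j]$ in place of $\chi_j$; the threshold argument is otherwise unchanged.
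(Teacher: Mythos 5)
Your argument is internally coherent, but it proves the opposite of what the paper's proof sets out to establish, so the divergence here is not merely stylistic. You read ``most affordable'' literally as ``cheapest'' and show that, because every reward $\exp(-\alpha(U-e_j))$ with $e_j<U$ vanishes as $\alpha\to\infty$, the cost terms eventually dominate and the node defects to $F_n$. The paper's proof intends the reverse: it states explicitly that the goal is ``to ensure any rational inference always prefers models of higher cost,'' i.e.\ that a large $\alpha$ deters arbitrage by swapping a strong model for a weak one (this is also how the theorem is used in Section 3). Its construction is pairwise: for each $j<l$ it posits the condition $\alpha_{jl}\geq \log(e_j-e_l)/(c_j-c_l)$ and sets $\theta=\max_{j,l}\alpha_{jl}$. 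That bound is asserted rather than derived from Formula (1) --- the profit comparison $B\exp(-\alpha(U-e_j))-c_j\geq B\exp(-\alpha(U-e_l))-c_l$ does not reduce to it, and comparing a log of a quality gap to a cost gap is dimensionally incoherent --- so the paper's own argument does not actually close.

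The most valuable part of your proposal is precisely the observation the paper's proof is missing: since $\chi$ is bounded above by $\exp(-\alpha(U-e_1))$, increasing $\alpha$ shrinks the \emph{absolute} reward gap between any two models even as it inflates their \emph{ratio}, so the intended conclusion (prefer the better, costlier model) can only hold if the best model saturates the ceiling ($e_1=U$), or the reward pool $B$ grows with $\alpha$, or one compares normalized rewards. Your explicit threshold $\theta=\max\{0,\tfrac{1}{U-e_1}\ln\tfrac{B}{c_{n-1}-c_n}\}$ is a correct proof of the literal statement under your deterministic-score model, and your flagged concern about replacing the random $\bar s$ by $e_j$ via Jensen is a legitimate secondary gap; but be aware that what you have proved contradicts both the paper's proof and its stated motivation, and in doing so exposes that the theorem as the authors intend it does not follow from the reward function as written.
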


\begin{proof}
To prove the theorem, we employ a constructive method to find the threshold $\theta$ based on the quality-cost matrix $\{(e_1,c_1),(e_2,c_2),\ldots,(e_n,c_n)\}$. To ensure any rational inference always prefers models of higher cost, we need to adjust the value of $\alpha$ to make a sufficiently large gap between the benefits of the model deployment options. For each pair of models $F_j$ and $F_l$, $j<l$, the gap of the benefit based on Formula (1) must be larger than the gap of their costs.

\begin{equation}
    \alpha_{jl} \geq \frac{\log (e_j - e_l)}{c_j - c_l}
\end{equation}

Therefore, it is safe to adopt the largest $\alpha_{jl}$ by iterating all pairs of $F_j$ and $F_l$, as

\begin{equation}
    \alpha \geq \max_{j,l}\alpha_{jl} =   \max_{j,l}\frac{\log (e_j - e_l)}{c_j - c_l}
\end{equation}

When $\alpha$ is no smaller than the value above, there is no motivation for inference nodes to deploy less accurate models from an economical perspective.
\end{proof}

The theorem above implies that PQML is robust against the adversary in our hypothesis by choosing the scaling factor $\alpha$. The proof itself also provides a simple method to estimate an appropriate setting of $\alpha$ based on the market setting, i.e., the availability of the models and their corresponding costs.

To understand the potential adversarial behavior of the quality assessors, we have some additional definitions and hypotheses. Consider a workload $\mathcal{W}$, which contains a \emph{countable} group of $m$ independent queries, i.e., $\mathcal{W}=\{q_1,q_2,\ldots,q_m\}$. In the NLP domains, for example, these queries comprise all possible text questions from all users. The distribution of quality score in terms of a model function $F$, workload $\mathcal{W}$ and a specific quality score method $M$ is then defined by all the output quality scores, i.e., $\mathcal{D}(\mathcal{W},F,M)$ is the collection of $\{M(q_1,F(q_1)),\ldots,M(q_m,F(q_m))\}$. 

\begin{hypothesis}\label{hypo:diversity}
The variance of the distribution of quality score associated with any model function $F$, real-world workload $\mathcal{W}$ and quality score method $M$ is no smaller than a constant $\Delta$, i.e.,
$$\mbox{Var}\left(\mathcal{D}(\mathcal{W},F,M)\right)\geq \Delta$$
\end{hypothesis}

The hypothesis above could be interpreted as a sufficiently large diversity of the output quality score. Even in the world-leading large language model, such as GPT4, the quality of the output may vary depending on the difficulty of the input query. It is therefore meaningful to assume such diversity is lower bounded by a constant $\Delta$. In Section \ref{sec:exp}, we will also present the statistics from real workload to consolidate the validity of the hypothesis. 

\begin{theorem}
Given the known lower bound $\Delta$ of all models, workloads, and quality score methods, if an adversary attempts to guess the quality score without running quality score method $M$, with a sufficiently small $\beta$, the expected reward of the adversary is smaller than any specified scalar $\epsilon$.
\end{theorem}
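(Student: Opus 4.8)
The plan is to read the adversary's objective as the \emph{additional margin} it seeks over its fair share of the quality-assessment reward---precisely the ``additional margins'' targeted in the exploitation scenario of Section \ref{sec:pqml:incentive}---and to show that this extractable surplus is driven below any $\epsilon$ once $\beta$ is taken small enough. Concretely, I model the adversary as controlling a set $A$ of assessor slots (possibly a single slot, possibly a colluding cluster) on which it reports \emph{guessed} scores, while every honest assessor reports the deterministic value $M(q,r)$. Writing $a_i=(s_i-\bar{s})^2$ for each slot, I first record the boundedness $0\le a_i\le (U-L)^2$ forced by $s_i\in[L,U]$, and I invoke Hypothesis \ref{hypo:diversity}: since $\mbox{Var}(\mathcal{D}(\mathcal{W},F,M))\ge\Delta>0$, the honest scores genuinely vary across the workload, so an adversary that does not run $M$ cannot track them and its guesses carry non-degenerate error. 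This is exactly the regime in which a sharply peaked reward surface would be exploitable by clustering guesses, which the theorem neutralizes by flattening that surface.

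First I would analyze the reward map $\phi$ near $\beta=0$. Direct differentiation of $h_i=\exp(-\beta a_i)/\sum_j\exp(-\beta a_j)$ gives $\left.\tfrac{dh_i}{d\beta}\right|_{\beta=0}=\tfrac{1}{k}(\bar{a}-a_i)$ with $\bar{a}=\tfrac1k\sum_j a_j$, so that $h_i(\beta)=\tfrac1k+\tfrac{\beta}{k}(\bar{a}-a_i)+O(\beta^2)$. Because $|\bar{a}-a_i|\le (U-L)^2$, this yields the first-order estimate $\bigl|h_i-\tfrac1k\bigr|\le \tfrac{(U-L)^2}{k}\,\beta+O(\beta^2)$, valid for every admissible score configuration; in particular $\phi$ converges to the uniform vector $(\tfrac1k,\ldots,\tfrac1k)$. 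Summing over the $|A|\le k$ slots the adversary controls gives $\bigl|\sum_{i\in A}h_i-\tfrac{|A|}{k}\bigr|\le (U-L)^2\,\beta+O(\beta^2)$: the adversary's reward collapses onto its fair share $|A|/k$, and the surplus obtainable by guessing vanishes linearly in $\beta$.

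The second step converts this into the claimed threshold. Taking expectations over $q\sim\mathcal{W}$ is immediate, since the per-query bound is uniform in the guessing error, so the expected additional margin is at most $(U-L)^2\beta+O(\beta^2)$. Choosing $\beta<\theta$, where $\theta$ is the root that makes this right-hand side equal to $\epsilon$ (to leading order $\theta\approx\epsilon/(U-L)^2$), forces the expected surplus below $\epsilon$---which is exactly the asserted statement, with the threshold placed on the \emph{small} side of $\beta$. Note the complementary role of Hypothesis \ref{hypo:diversity}: the guaranteed spread $\Delta$ certifies that the adversary's cluster is genuinely off-center, so the bound is non-vacuous precisely in the cases where a large-$\beta$ mechanism would hand the adversary an outsized share.

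I expect the main obstacle to be making the expansion \emph{uniform}: the $O(\beta^2)$ remainder must be controlled by a single constant valid for all $a_i\in[0,(U-L)^2]$, for every subset $A$, and for every query in $\mathcal{W}$, so that one threshold $\theta$ serves simultaneously. I would dispatch this by bounding the second derivative of the softmax-type map $\phi$ on the compact parameter box $\beta\in[0,1]$, $a_i\in[0,(U-L)^2]$: there the denominator $\sum_j\exp(-\beta a_j)$ stays trapped in $\bigl[k\,e^{-\beta(U-L)^2},\,k\bigr]$, so both $h_i$ and $h_i''$ admit explicit, $\beta$-free and configuration-free bounds. This upgrades the pointwise Taylor estimate into the global, workload-independent guarantee the theorem requires.
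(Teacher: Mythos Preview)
Your route is genuinely different from the paper's. You reinterpret the adversary's ``reward'' as the \emph{surplus} over the fair share, $\sum_{i\in A}h_i-|A|/k$, and control it by Taylor-expanding the softmax map $\phi$ around $\beta=0$; the resulting bound $(U-L)^2\beta+O(\beta^2)$ is then pushed below $\epsilon$ by taking $\beta$ small. The paper instead targets the \emph{absolute} expected share $E[h_i]$ of a single guessing assessor among $k-1$ honest ones: since every honest node reports the deterministic value $M(q,r)$, each honest contribution to the denominator is taken as $1$, giving $h_i=\exp\bigl(-\beta(s_i-\bar s)^2\bigr)\big/\bigl(\exp(-\beta(s_i-\bar s)^2)+k-1\bigr)$; the paper then invokes Jensen's inequality together with the variance floor $\mbox{Var}(\mathcal{D})\ge\Delta$ of Hypothesis~\ref{hypo:diversity} to obtain $E[h_i]\le \exp(-\beta\Delta)\big/\bigl(\exp(-\beta\Delta)+k-1\bigr)$ and solves this for the threshold on $\beta$.

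Two consequences deserve flagging. First, $\Delta$ is \emph{load-bearing} in the paper's argument---it is the exponent that suppresses the adversary's term---whereas in yours it is decorative: your uniform bound depends only on $(U-L)^2$, so Hypothesis~\ref{hypo:diversity} is never actually used, contrary to the theorem's emphasis on ``the known lower bound~$\Delta$.'' Second, the two arguments push $\beta$ in \emph{opposite} directions: your surplus vanishes as $\beta\to 0$, while the paper's bound on $E[h_i]$ tends to $1/k$ as $\beta\to 0$ and to $0$ only as $\beta\to\infty$. The theorem says ``sufficiently small $\beta$,'' and the paper's final displayed condition is $\beta\le\tfrac{1}{\Delta}\log\tfrac{k-1}{\epsilon^{-1}-1}$, which is negative once $\epsilon<1/k$; so the paper's own statement and proof carry a direction inconsistency. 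Your reading (surplus rather than absolute share) is an internally consistent way to reconcile the ``small $\beta$'' wording, but it is \emph{not} the quantity the paper bounds, and it quietly discards the dependence on $\Delta$ that both the statement and the paper's proof foreground.
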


\begin{proof}
We again use the constructive method to identify the maximal $\beta$ based on the settings of $\Delta$ and $\epsilon$. When all nodes but $i$-th node stick to the quality assessment method $M$, and $i$-th node attempts to guess the score by randomly choosing a value from the distribution of the scores $\mathcal{D}(\mathcal{W},F,M)$. The expected reward of the $i$-th assessor is 
$$h_i=\frac{\exp(-\beta(s_i-\bar{s})^2)}{\exp(-\beta(s_i-\bar{s})^2) + k - 1}$$

By Jensen inequality \citep{boyd2004convex}, the expectation of the variable $h_i$ is no larger than

$$E(h_i)\leq \frac{\exp(-\beta \Delta)}{\exp(-\beta \Delta) + k - 1}$$

To make $E(h_i)$ smaller than $\epsilon$, it is sufficient when we have

$$\beta \leq \frac{1}{\Delta}\log \frac{k-1}{\epsilon^{-1}-1} $$

This shows that such $\beta$ always exists for any given $\Delta$ and $\epsilon$ and thus proves the theorem.
\end{proof}

\section{Performance Optimization}\label{sec:opt}

In previous sections, we propose the PQML framework, a concrete protocol for NLP applications, along with some adversarial analysis of the protocol. In practice, however, the efficacy of the network also depends on certain important architecture design decisions. In this section, we discuss a few optimizations to tackle some key challenges in the implementation of PQML.

\subsection{Fast Consensus}\label{sec:opt:wait}


When multiple assessors are involved in the quality assessment step, the final latency of the consensus is determined by the response time of the slowest assessor. If any assessor node fails to respond, e.g., due to network failure, the system is no longer generating response quality consensus. To tackle the problem, we propose a revised mechanism to circumvent the potential risk.

If the system targets $k$ quality scores for the consensus, it assigns the quality score tasks to more than $k$ assessor nodes. Each assessor node attempts to finish the task as soon as possible. At the same time, they also monitor the progress of other nodes. When $k$ encrypted scores are published, they upload the public keys as described in Section \ref{sec:pqml}. The slow or non-responding nodes are excluded from the consensus, and therefore not entitled to claim the reward. Figure \ref{fig:fast-consensus} presents an example to illustrate the mechanism. In the example, only Assessor 1 and Assessor 2 are successful in quality score submission. 
Note that there is no incentive for the nodes to submit their public keys before at least $k$ encrypted scores are collected.

\subsection{Deterministic Node Selection}\label{sec:opt:select}

A subtle yet important question in PQML protocol is how to select the nodes in the network as inference and quality assessor nodes. In a decentralized blockchain-based system, it is crucial to balance fairness and efficiency. It would discourage participation if the system is unable to assign inference or quality assessment jobs to the nodes in the network. 

Another concern in the task node selection process is the overhead of randomness on the blockchain. Although it is technically feasible to generate verifiable random numbers \citep{gilad2017algorand} on blockchains, the cost of such a generation function is usually too high to afford. It is therefore unrealistic to adopt a traditional randomization strategy to select nodes for specific tasks. These observations motivate us to design a deterministic selection algorithm for both inference and quality assessment tasks.

Our deterministic task node selection algorithm consists of two parts, namely energy \emph{accumulation} and energy \emph{consumption}. Each node in the network is associated with two values, the energy value and the energy step. The energy value reflects how long the node has waited for a task, while the energy step indicates how fast the system would accumulate the energy value over time. A joining node is initialized by setting the energy value at 0 and the energy step at 1. There are two separate pools for inference node selection and quality assessment node selection. All nodes are ranked based on their current energy value in a non-ascending order. Ties are broken based on their join time. For inference node selection, the top-1 node is employed for query processing, while for quality assessment, the top-$k$ nodes are invited based on the parameter configuration on $k$. Once the nodes finish the corresponding task, their energy is reset to zero, giving opportunities to other nodes for the next task assignment.

\begin{figure}[t]
\centering
     \includegraphics[width=1.0\textwidth]{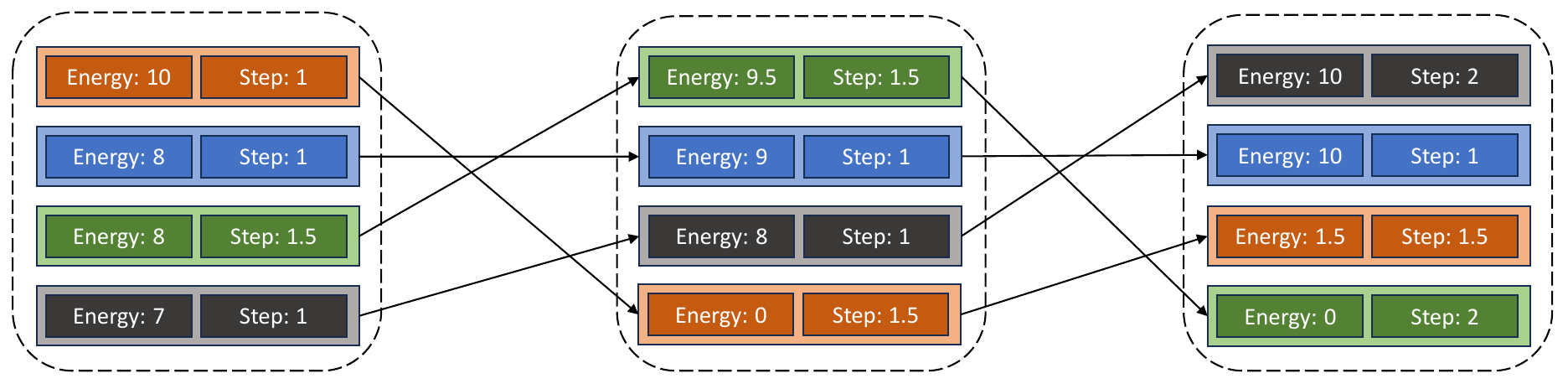}
     \vspace{-10pt}
      \caption{In this example, we illustrate a running example with 4 nodes in the inference pool. The orange node is selected for the first query processing because it has the highest energy value at 10. After the query is finished, the energy value of the orange node is reset to 0, while other nodes' energy values are incremented according to the step value. The green node overtakes the blue node because its step value is higher. The orange node's step value gains a 50\% bonus because of its good performance in the processing of the first query. After the completion of the second query, the grey node receives the additional bonus on the step value, because it has waited for too long for a concrete task. This grants the grey node a chance to process the third query in the running example.}
       \label{fig:fair-selection}
\end{figure}

There are two more configurations in the system, waiting threshold $T$ and the additional bonus $B$. If a node is not given any task in a period longer than $T$, it receives a complimentary bonus $B$ on its step value. This bonus is expected to grant the node more opportunities for task assignment. After the node is assigned to any task, this additional bonus $B$ is removed from the

If a node performs well in inference or quality assessment, the step value is updated with a bonus increase of 50\%. Otherwise, the initial step value remains as 1. This is used to encourage better performance for both inference and quality assessment tasks.

Instead of calculating the energy values and steps for each nodes periodically, it can be simplified to update only when a new query arrives. The selection node is responsible for re-calculating the energy values and steps for each node. In Figure \ref{fig:fair-selection}, an example of energy evolution is presented. 


\section{Empirical Studies}\label{sec:exp}

The general goal of the empirical studies in this paper is to evaluate the effectiveness and efficiency of PQML in NLP-based generative AI applications. We therefore adopt a simulation-based strategy in the empirical studies, because both the effectiveness and efficiency of PQML could be fully verified by simulations.

\subsection{Effectiveness}\label{sec:exp:effective}

The effectiveness of our PoQ scheme depends on the accuracy of the scoring method on the output of large language model. In this part of the section, we focus on PQML and test the capability of cross-encoder on real workloads.

\noindent\textbf{Workload: }The workload in our experiment consists of three AI services (i.e., gpt-3.5-turbo-0125, gpt-4-turbo\footnote{\url{https://platform.openai.com/docs/models/gpt-4-turbo-and-gpt-4}}, Mistral-7b \footnote{\url{https://huggingface.co/mistralai/Mistral-7B-v0.1}}, Mixtral-8x7b \footnote{\url{https://huggingface.co/mistralai/Mixtral-8x7B-v0.1}} and Llama3-70b \footnote{\url{https://huggingface.co/meta-llama/Meta-Llama-3-70B}} as the generative models, one sentence transformer model\footnote{\url{https://huggingface.co/sentence-transformers/stsb-distilroberta-base-v2}} as cross-encoders and SQuAD \citep{rajpurkar2016squad} as the questions/queries used in our tests.

\noindent\textbf{Metrics: }The ground truth of the scores is generated by using GPT 4. For question and model/service, we feed its output to GPT4 for a quality evaluation between 0 and 10. Because the outputs of the cross-encoder are real numbers in $[-1,1]$, we \emph{normalize} the quality score from the cross-encoder to $[0,10]$ to make direct comparison possible. After normalization, we report the average score, its variance and the Pearson correlation between cross-encoder output and GPT ground truth.

\begin{table}[t]
\caption{We test with 100 questions from 100 different categories in SQuAD over 5 different generative AIs. We label the type of the AIs as a \emph{service} if we access the AI via public APIs of the vendor, or as a \emph{model} if it is an open-source model run locally.}\label{table:exp:effective}
\begin{center}
\begin{tabular}{ | l | l | c | c| c | }
 \hline
 Engine & Type & GPT Score (0-10) & Cross-Encoder Score (0-10) & Correlation \\ \hline
 GPT3.5 & Service & $9.04\pm 1.30$ & $7.88\pm 0.53$ & 0.12 \\ \hline
 GPT4 & Service & $9.24\pm 0.83$ & $7.89\pm 0.40$ & 0.13 \\ \hline 
 Mistral-7B & Model & $7.70\pm 2.15$ & $7.73\pm 0.50$ & 0.28 \\ \hline
 Mixtral-8x7b & Model & $6.97\pm 2.11$ & $7.59\pm 0.47$ & 0.35 \\ \hline
 Llama3-70b & Model & $7.33\pm 2.69$ & $7.81\pm 0.47$ & -0.06 \\ \hline
\end{tabular}
\end{center}
\end{table}

\noindent\textbf{Results: } All results are summarized in Table \ref{table:exp:effective}. There are some major observations on the results. First, the bias and variance of GPT score and Cross-Encoder score are different yet consistent. The average GPT score on GPT4 results is higher than the average GPT score of Mixtral-8x7b results by 2.1, while their difference on cross-encoder score is only 0.3. This is because of the internal bias to its own generated outputs when GPT4 is used to evaluate the correctness of other generative AIs. The cross-encoder score is more meaningful to human users. The variance of cross-encoder scores, on the other hand, is much smaller than the variance of GPT score. However, GPT4 and GPT 3.5 still significantly outperform the other models by a margin from 0.1 to 0.3. In Figure \ref{fig:reward}, we present the distribution of rewards based on Formula (1) in Section \ref{sec:pqml}, when different $\alpha$ is deployed. When $\alpha$ is small, PQML tends to give out rewards in a more friendly way, such that most of the responses could receive over 75\% of the reward. However, the difference between good and bad responses is also more significant. In Figure \ref{fig:reward:alpha0.5}, GPT4 obviously grabs more reward than Mixtral 8x7b. When using a larger $\alpha$, the reward difference diminishes, as shown in Figure \ref{fig:reward:alpha2}, and the advantage of GPT4 is much less significant.

\begin{figure}[h]
     \centering
     \begin{subfigure}[b]{0.33\textwidth}
         \centering
         \includegraphics[width=\textwidth]{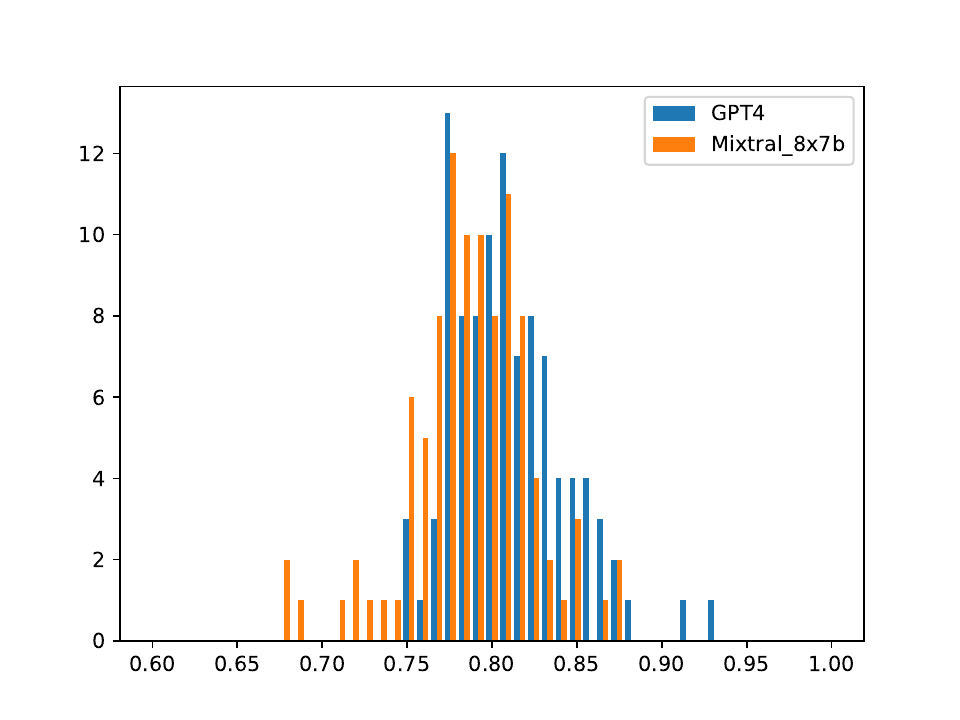}
         \caption{$\alpha=0.5$}
         \label{fig:reward:alpha0.5}
     \end{subfigure}
     \hspace{-10pt}
     \begin{subfigure}[b]{0.33\textwidth}
         \centering
         \includegraphics[width=\textwidth]{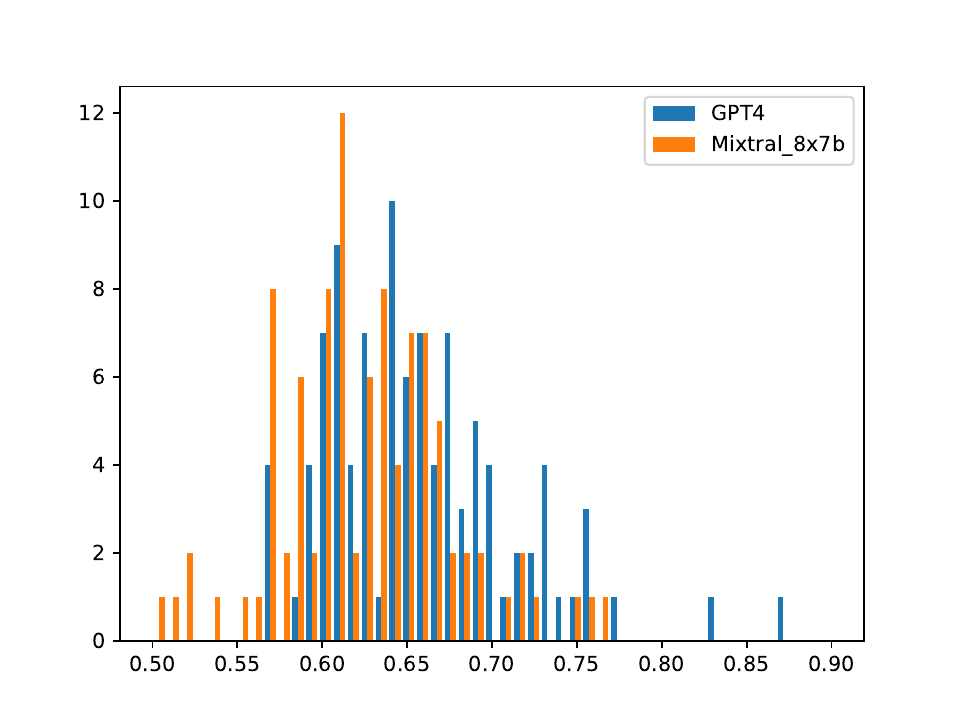}
         \caption{$\alpha=1$}
         \label{fig:reward:alpha1}
     \end{subfigure}
     \hspace{-10pt}
     \begin{subfigure}[b]{0.33\textwidth}
         \centering
         \includegraphics[width=\textwidth]{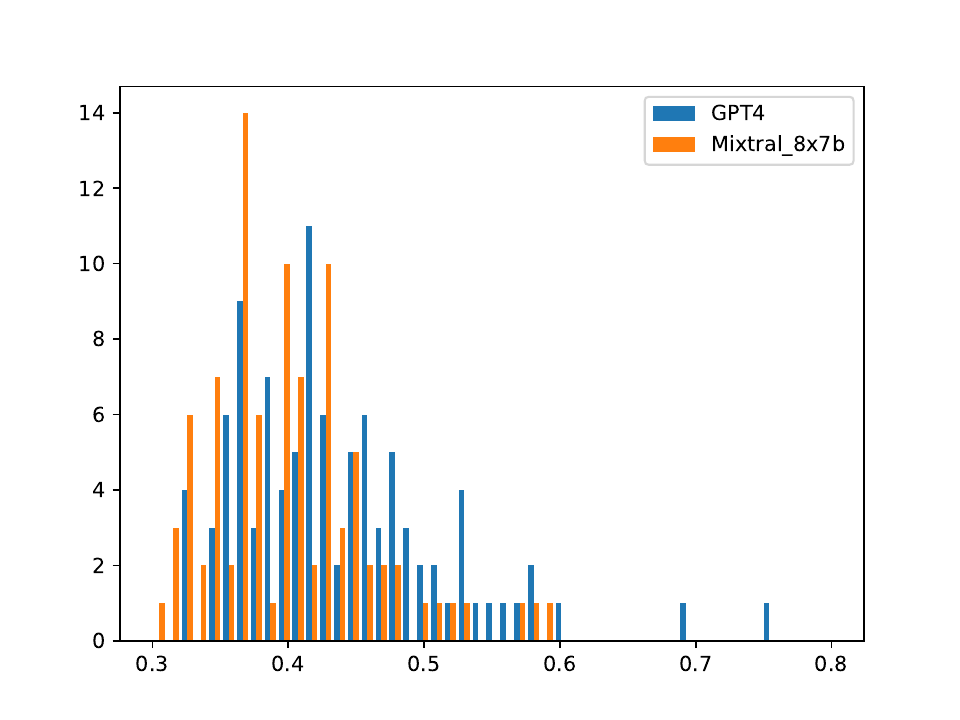}
         \caption{$\alpha=2$}
         \label{fig:reward:alpha2}
     \end{subfigure}
        \caption{Reward distribution of GPT4 and Mixtral 8x7b under different configurations of $\alpha$.}
        \label{fig:reward}
\end{figure}

Second, the correlation between the GPT score and the cross-encoder score is weak when the results are better, i.e., on GPT 3.5, GPT 4 and Llama3-70b. When the output quality is weaker, i.e., on Mistral-7b and Mixtral-8x7b, the correlation turns stronger. This implies that cross-encoder score is capable of capturing the missing of information in the generative AI output. It is unable to identify the subtle difference on the accuracy of the information itself.

\subsection{Efficiency}\label{sec:exp:efficient}

In this part of the section, we focus on the efficiency of PQML, specifically the speed of consensus, i.e., how fast the decentralized nodes could reach an agreement on the quality assessment and finally distribute the rewards to the quality assessors. This is important since most of the participants expect a quick settlement for each inference operation.

\noindent\textbf{Simulation Setup: }Instead of building a test network on blockchain, we simulate the consensus process by using Kafka as the distributed ledger system. There are three types of participating nodes in the simulation, \emph{management nodes}, \emph{inference nodes} and \emph{quality checking nodes}. The management node is responsible for handling query requests from the user, which chooses the inference node based on the deterministic priority scheme introduced in Section \ref{sec:opt:select}. After receiving a response from the inference node, the management node issues quality check requests to quality check nodes. All results are written to Kafka for recording. The management node also updates the energy and step size of each inference and management nodes, based on their performance in the query processing. The default setting of the simulation environment includes 1 single management node, 10 inference nodes and 30 quality checking nodes.

To investigate the impact of network size and controlling parameters, we test the end-to-end latency of the quality assessment phase when the number of assessors assigned to each inference task, i.e., $k$, varies from 1 to 30. When $k$ is a small number, the system relies on fewer quality assessors to evaluate the inference output. When $k$ grows, PQML consumes more computation resource from the quality assessors, but is expected to generate more reliable and stable quality assessments.

\noindent\textbf{Results: }In Figure \ref{fig:exp:latency}, we present the overall latency of consensus, i.e., the completion of quality validation of all $k$ validators, as well as the average latency of cross-encoder. The results show that the latency does not increase when PQML asks for more validator responses. This is because the inference with BERT-based cross-encoder is fast and stable, such that almost every validator could finish the quality assessment job within 35 seconds. Note that the consensus with Kafka is usually much faster than the consensus with blockchains. However, the additional overhead of blockchain is predictable and therefore does not much affect the efficiency of PQML consensus.

\begin{figure}[h]
\centering
     \includegraphics[width=.4\textwidth]{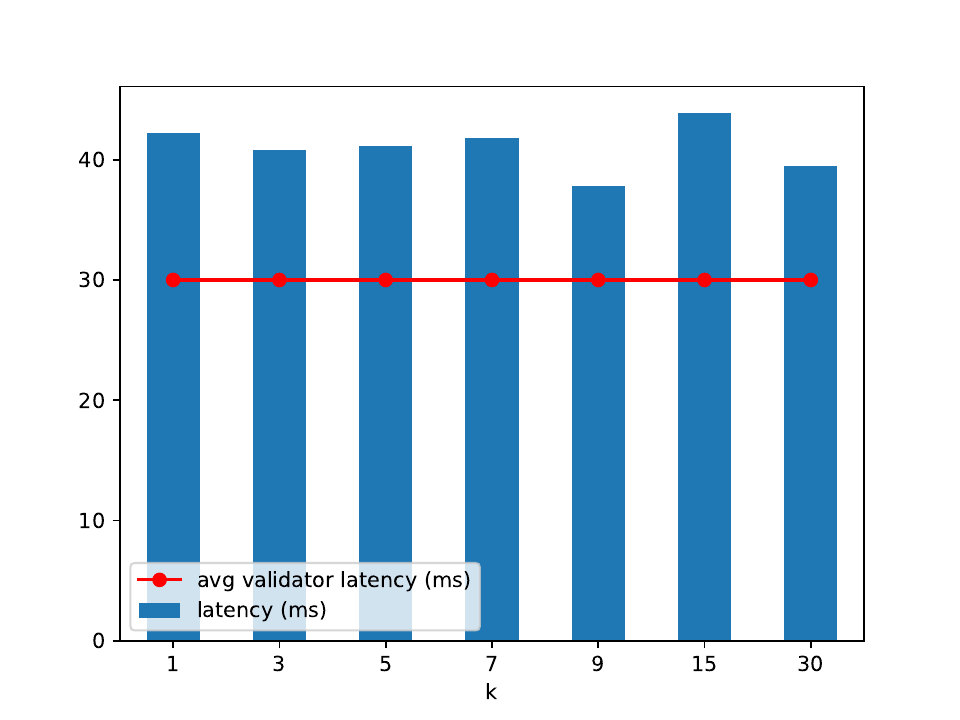}
     \vspace{-10pt}
      \caption{When varying the number of validators $k$ for inference tasks, the overall latency of consensus remains stable at around 50 milliseconds. We also plot the average latency of cross-encoder calculation at around 30 milliseconds.}
       \label{fig:exp:latency}
\end{figure}

It is unfortunately impossible to directly compare PQML against ZKML and OPML, because ZKML and OPML is usually only applicable to models of limited size. EZKL\footnote{\url{https://github.com/zkonduit/ezkl}} based on Halo2 is the most popular implementation of ZKML, which could generate zero-knowledge proofs for simple MLP neural network. OPML is known to be slower than ZKML when validating the computation in VMs, as stated by their authors \citep{conway2024opml}. It is therefore convincing that PQML is the only practical solution to NLP-based LLM inference on blockchain.


\section{Concluding Remarks}\label{sec:conclu}

In this paper, we present \emph{PoQ} as an alternative to classic zero-knowledge proof to enable off-chain generative model inference in a trustless environment. We demonstrate the effectiveness and efficiency of our approach for NLP-based LLM applications in both theoretical analysis and empirical studies. The inference evaluation consensus could be generated within a few seconds on blockchains.

In the future, it would be interesting to explore the following research directions. Firstly, in our current PQML setting, there is only one inference node selected to conduct the model reasoning job. When multiple inference nodes are engaged in the inference phase, a more complex reward mechanism is needed to ensure fair allocation of rewards based on their contributions. Second, the cross-encoder models used in our experiments are all pre-trained based on training data for document search. We can train a new BERT by using more question-answer pairs, to further improve the accuracy of the quality scores. Thirdly, PQML is only applicable to natural language applications. To cover image generation applications, it is necessary to design a proper quality assessment method for image outputs.

\bibliographystyle{johd}
\bibliography{bib}


\end{document}